\newtheorem{theorem}{Theorem}[section]
\newtheorem{lemma}[theorem]{Lemma}
\newtheorem{prop}[theorem]{Proposition}
\newcommand{\ip}[2]{\langle {#1},\, {#2} \rangle}
\newcommand{\nlsum}{\sum\nolimits}
\newcommand{\C}{\mathbb{C}}
\newcommand{\set}[1]{\{ #1\}}
\newcommand{\invp}[1]{\left(#1\right)^{-1}}
\newcommand{\inv}[1]{#1^{-1}}
\newcommand{\Pc}{\mathcal{P}}
\newcommand{\Yc}{\mathcal{Y}}
\newcommand{\scal}[2]{\left\langle#1\,\middle\vert\,#2\right\rangle}
\DeclareMathOperator{\argmax}{argmax}
\numberwithin{equation}{section}
\definecolor{dkgreen}{rgb}{0,0.6,0}
\definecolor{gray}{rgb}{0.5,0.5,0.5}
\definecolor{mauve}{rgb}{0.58,0,0.82}
\icmltitlerunning{Fixed-point algorithms for determinantal point processes}
\begin{document} 

\twocolumn[
\icmltitle{Fixed-point algorithms for learning determinantal point processes}
\icmlauthor{Zelda Mariet}{zelda@csail.mit.edu}
\icmlauthor{Suvrit Sra}{suvrit@mit.edu}
\icmladdress{Massachusetts Institute of Technology, Cambridge, MA 02139 USA}
\icmlkeywords{manifold optimization; fixed-point theory; DPP}

\vskip 0.3in
]

\begin{abstract}
  Determinantal point processes (DPPs) offer an elegant tool for encoding probabilities over subsets of a ground set. Discrete DPPs are parametrized by a positive semidefinite matrix (called the DPP kernel), and estimating this kernel is key to learning DPPs from observed data.  We consider the task of learning the DPP kernel, and develop for it a surprisingly simple yet effective new algorithm. Our algorithm offers the following benefits over previous approaches: (a) it is much simpler; (b) it yields equally good and sometimes even better local maxima; and (c) it runs an order of magnitude faster on large problems. We present experimental results on both real and simulated data to illustrate the numerical performance of our technique.
\end{abstract}

\section{Introduction}
Determinantal point processes (DPPs) arose in statistical mechanics, where they were originally used to model fermions~\citep{macchi}. Recently, they have  witnessed substantial interest in a variety of machine learning applications~\citep{kulesza,kulTas.book}.

One of the key features of DPPs is their ability to model the notion of diversity while respecting quality, a concern that underlies the broader task of subset selection where balancing quality with diversity is a well-known issue---see e.g., document summarization~\citep{linBilmes}, object retrieval~\citep{affandi}, recommender systems~\citep{zhou}, and sensor placement~\citep{krause}.

DPPs are also interesting in their own right: they have various combinatorial, probabilistic, and analytic properties, while involving a fascinating set of open problems~\citep{lyons,hough,kulesza}. 

Within machine learning DPPs have found good use---see for instance~\citep{gillen}; \citep{kulTas11.uai}; \citep{kulTas11.icml}; \citep{affandi}; \citep{affTas}; \citep{affKulTas}; \citep{gillen12}. For additional references and material we refer the reader to the survey~\citep{kulTas.book}.

Our paper is motivated by the recent work of~\citet{gillen}, who made notable progress on the task of learning a DPP kernel from data. This task is conjectured to be NP-Hard~\citep[Conjecture 4.1]{kulesza}. \citet{gillen} presented a carefully designed EM-style procedure, which, unlike several previous approaches (e.g., \citep{kulTas11.uai,kulTas11.icml,affandi}) learns a full DPP kernel nonparameterically.  

One main observation of~\citet{gillen} is that applying projected gradient ascent to the DPP log-likelihood usually results in degenerate estimates (because it involves projection onto the set $\set{X : 0 \preceq X \preceq I}$). Hence one may wonder if instead we could apply more sophisticated manifold optimization techniques~\citep{absil,manopt}. While this idea is attractive, and indeed applicable, e.g., via the excellent \textsc{Manopt} toolbox~\citep{manopt}, empirically it turns out to be computationally too demanding; the EM strategy of \citet{gillen} is more practical. 

We depart from both EM and manifold optimization to develop a new learning algorithm that (a) is simple, yet powerful; and (b) yields essentially the same log-likelihood values as the EM approach while running significantly faster. In particular, our algorithm runs an order of magnitude faster on larger problems. 

The key innovation of our approach is a derivation via a fixed-point view, which by construction ensures positive definiteness at every iteration. Its convergence analysis involves an implicit  bound-optimization iteration to ensure monotonic ascent.\footnote{The convergence analysis in this version of the paper improves upon our original submission, in that our proof is now constructive and requires weaker assumptions.} A pleasant byproduct of the fixed-point approach is that it avoids any eigenvalue/vector computations, enabling a further savings in running time.

\subsection{Background and problem setup}
Without loss of generality we assume that the ground set of $N$ items is $\set{1,2,\ldots,N}$, which we denote by $\Yc$. A (discrete) DPP on $\Yc$ is a probability measure $\Pc$ on $2^{\Yc}$ (the set of all subsets of $\Yc$) such that for any $Y \subseteq \Yc$, the probability $\Pc(Y)$ verifies $\Pc(Y) \propto \det(L_Y)$; here $L_Y$ denotes the principal submatrix of the \emph{DPP kernel} $L$ induced by indices in $Y$. Intuitively, the diagonal entry $L_{ii}$ of the kernel matrix $L$ captures some notion of the importance of item $i$, whereas an off-diagonal entry $L_{ij}=L_{ji}$ measures similarity between  items $i$ and $j$. This intuitive notion provides further motivation for seeking DPPs with non-diagonal kernels when there is implicit interaction between the observed items. 

The normalization constant for the measure $\Pc$ follows upon observing that $\sum_{Y \subseteq \Yc} \det(L_Y) = \det(L+I)$. Thus,
\begin{equation}
  \label{eq:1}
  \Pc(Y) = \frac{\det(L_Y)}{\det(L+I)},\qquad Y \subseteq \Yc.
\end{equation}

DPPs can also be given an alternative representation through a \emph{marginal kernel} $K$ that captures for a random $Y \sim \Pc$ and every $A \subseteq \Yc$, the marginal probability
\begin{equation}
  \label{eq:2}
  \Pc(A \subseteq Y) = \det(K_A).
\end{equation}
It is easy to verify that $K = L(L+I)^{-1}$, which also implies that $K$ and $L$ have the same eigenvectors and differ only in their eigenvalues. It can also be shown~\citep{kulesza} that $\Pc(Y) = |\det(K-I_{Y^c})|$, where $I_{Y^c}$ is a partial $N\times N$ identity matrix with diagonal entries in $Y$ zeroed out. 

Both parameterizations~\eqref{eq:1} and~\eqref{eq:2} of the DPP probability are useful: \citet{gillen} used a formulation in terms of $K$; we prefer~\eqref{eq:1} as it aligns better with our algorithmic approach.

\subsection{Learning the DPP Kernel}
The learning task aims to fit a DPP kernel (either $L$ or equivalently the marginal kernel $K$) consistent with a collection of observed subsets. Suppose we obtain as training data $n$ subsets $(Y_1,\ldots,Y_n)$ of the ground set $\Yc$. The task is to maximize the likelihood of these observations. Two equivalent formulations of this maximization task may be considered:
\begin{align}
  \label{eq:3}
  \max_{L \succeq 0}\ &\nlsum_{i=1}^n \log\det(L_{Y_i}) - n\log\det(I+L),\\
  \label{eq:4}
  \max_{0 \preceq K \preceq I}\ &\nlsum_{i=1}^n \log\bigl(|\det(K-I_{Y_i^c})| \bigr).
\end{align}
We will use formulation~\eqref{eq:3} in this paper. \citet{gillen} used~\eqref{eq:4} and exploited its structure to derive a somewhat intricate EM-style method for optimizing it. Both \eqref{eq:3} and \eqref{eq:4} are nonconvex and difficult optimize. For instance, using projected gradient on~\eqref{eq:4} may seem tempting, but projection ends up yielding degenerate (diagonal and rank-deficient) solutions which is undesirable when trying to capture interaction between observations---indeed, this criticism motivated~\citet{gillen} to derive the EM algorithm.

We approach problem \eqref{eq:3} from a different viewpoint (which also avoids projection) and as a result obtain a new optimization algorithm for estimating $L$. This algorithm, its analysis, and empirical performance are the subject of the remainder of the paper.

\section{Optimization algorithm}
\label{sec:algo}
The method that we derive has two key components: (i) a fixed-point view that helps obtain an iteration that satisfies the crucial positive definiteness constraint $L\succeq 0$ by construction; and (ii) an implicit bound optimization based analysis that ensures monotonic ascent. The resulting algorithm is vastly simpler than the previous EM-style approach of~\citet{gillen}.

If $|Y|=k$, then for a suitable $N\times k$ indicator matrix $U$ we can write $L_Y = U^*LU$, which is also known as a \emph{compression} ($U^*$ denotes the Hermitian transpose). We write $U_i^*LU_i$ interchangeably with $L_{Y_i}$, implicitly assuming suitable indicator matrices $U_i$ such that $U_i^*U_i = I_{|Y_i|}$. To reduce clutter, we will drop the subscript on the identity matrix, its dimension being clear from context. 

Denote by $\phi(L)$ the objective function in~\eqref{eq:3}. Assume for simplicity that the constraint set is open, i.e., $L \succ 0$. Then any critical point of the log-likelihood must satisfy
\begin{equation}
  \label{eq:5}
  \begin{split}
  &\nabla\phi(L) = 0,\quad\text{or equivalently}\\
  &\nlsum_{i=1}^nU_i\invp{U_i^*LU_i}U_i^* - n\invp{I+L} = 0.
\end{split}
\end{equation}
Any (strictly) positive definite solution to the nonlinear matrix equation~\eqref{eq:5} is a candidate locally optimal solution.

We solve this matrix equation by developing a fixed-point iteration. In particular, define \[\Delta := \tfrac{1}{n}\nlsum_{i=1}^nU_i\invp{U_i^*LU_i}U_i^* - (I+L)^{-1},\] with which we may equivalently write~\eqref{eq:5} as
\begin{equation}
  \label{eq:6}
  \Delta + \inv{L} = \inv{L}.
\end{equation}
Equation~\eqref{eq:6} suggests the following iteration
\begin{equation}
  \label{eq:7}
  L_{k+1}^{-1} \gets \inv{L_k} + \Delta_k,\qquad k=0,1,\ldots.
\end{equation}
\emph{A priori} there is no reason for iteration~\eqref{eq:7} to be valid (i.e., converge to a stationary point). But we write it in this form to highlight its crucial feature: starting from an initial $L_0 \succ 0$, it generates positive definite iterates (Prop.~\ref{prop.pd}).

\begin{prop}
  \label{prop.pd}
  Let $L_0 \succ 0$. Then, the sequence $\set{L_k}_{k \ge1}$ generated by~\eqref{eq:7} remains positive definite.
\end{prop}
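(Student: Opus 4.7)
The proof will proceed by induction on $k$, with the base case $L_0 \succ 0$ given. For the inductive step, assume $L_k \succ 0$; I want to show $L_{k+1} \succ 0$, which is equivalent to showing $L_{k+1}^{-1} \succ 0$. Substituting the definition of $\Delta_k$ into the update rule~\eqref{eq:7} gives
\begin{equation*}
  L_{k+1}^{-1} = \bigl[L_k^{-1} - (I+L_k)^{-1}\bigr] + \tfrac{1}{n}\nlsum_{i=1}^n U_i\invp{U_i^*L_kU_i}U_i^*,
\end{equation*}
and the plan is to show that the first (bracketed) term is positive definite and the second term is positive semidefinite, so their sum is positive definite.

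For the bracketed term, I will use the standard resolvent identity $A^{-1} - B^{-1} = A^{-1}(B - A)B^{-1}$ with $A = L_k$ and $B = I + L_k$ to obtain $L_k^{-1} - (I+L_k)^{-1} = L_k^{-1}(I+L_k)^{-1}$. Since $L_k \succ 0$, both $L_k^{-1}$ and $(I+L_k)^{-1}$ are positive definite, and because they commute (they share eigenvectors), their product is positive definite. For the sum, each summand $U_i(U_i^*L_kU_i)^{-1}U_i^*$ is positive semidefinite: for any $x \in \reals^N$, setting $y = U_i^*x$ gives $x^*U_i(U_i^*L_kU_i)^{-1}U_i^*x = y^*(U_i^*L_kU_i)^{-1}y \geq 0$, using that $U_i^*L_kU_i \succ 0$ (as $L_k \succ 0$ and $U_i$ has full column rank, so the compression is positive definite).

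Adding a positive definite matrix to a positive semidefinite one yields a positive definite matrix, so $L_{k+1}^{-1} \succ 0$ and hence $L_{k+1} \succ 0$, closing the induction. There is no significant obstacle here; the main step is spotting the correct regrouping of the terms in $\Delta_k + L_k^{-1}$ so that the $(I+L_k)^{-1}$ subtraction is absorbed against $L_k^{-1}$ via the resolvent identity, leaving only manifestly nonnegative pieces.
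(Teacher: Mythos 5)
Your proof is correct and takes essentially the same route as the paper: both regroup $L_k^{-1} + \Delta_k$ as the positive definite difference $L_k^{-1} - (I+L_k)^{-1}$ plus the positive semidefinite average of the terms $U_i(U_i^*L_kU_i)^{-1}U_i^*$. The only cosmetic difference is that you justify $L_k^{-1} - (I+L_k)^{-1} \succ 0$ via the resolvent identity (giving $L_k^{-1}(I+L_k)^{-1}$, a product of commuting positive definite matrices), whereas the paper simply invokes the order-inversion (antitonicity) property of the matrix inverse applied to $I + L_k \succ L_k$; both justifications are valid.
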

\begin{proof}
  The proof is by induction. It suffices to show that
  \begin{equation*}
    L \succ 0 \implies \inv{L} + \Delta \succ 0.
  \end{equation*}
  Since $I+L \succ L$, from the order inversion property of the matrix inverse map it follows that $\inv{L} \succ \inv{(I+L)}$.
  Now adding the matrix $\tfrac1n\sum_{i=1}U_i\invp{U_i^*LU_i}U_i^* \succeq 0$ we obtain the desired inequality by definition of $\Delta$.
\end{proof}

A quick experiment reveals that iteration~\eqref{eq:7} \emph{does not converge} to a local maximizer of $\phi(L)$. To fix this defect, we rewrite the key equation~\eqref{eq:6} in a different manner:
\begin{equation}
  \label{eq:8}
  L = L + L\Delta L.
\end{equation}
This equation is obtained by multiplying~\eqref{eq:6} on the left and right by $L$. Therefore, we now consider the iteration
\begin{equation}
  \label{eq:9}
  L_{k+1} \gets L_k + L_k\Delta_kL_k,\quad k = 0,1,\ldots.
\end{equation}
Prop.~\ref{prop.pd} in combination with the fact that congruence preserves positive definiteness (i.e., if $X \succeq 0$, then $Z^*XZ \succeq 0$ for any complex matrix $Z$), implies that if $L_0 \succ 0$, then the sequence  $\set{L_k}_{k\ge 1}$ obtained from iteration~\eqref{eq:9} is also positive definite. What is more remarkable is that contrary to iteration~\eqref{eq:7}, the sequence generated by~\eqref{eq:9} monotonically increases the log-likelihood. 

While monotonicity is not apparent from our intuitive derivation above, it  becomes apparent once we recognize an implicit change of variables that seems to underlie our method.

\subsection{Convergence Analysis}
\begin{theorem}
  \label{thm:convergence}
  Let $L_k$ be generated via~\eqref{eq:9}. Then, the sequence $\{\phi(L_k)\}_{k \geq 0}$ is monotonically increasing.
\end{theorem}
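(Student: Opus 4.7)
The plan is to prove monotonicity by recognizing iteration~\eqref{eq:9} as the exact maximizer of an implicit minorize--maximize (MM) surrogate $Q(L; L_k)$. Concretely, I would exhibit an auxiliary function satisfying (i) $Q(L_k; L_k) = \phi(L_k)$, (ii) $Q(L; L_k) \leq \phi(L)$ on $L \succ 0$, and (iii) $\arg\max_L Q(\,\cdot\,; L_k) = L_{k+1}$; the standard MM telescope $\phi(L_{k+1}) \geq Q(L_{k+1}; L_k) \geq Q(L_k; L_k) = \phi(L_k)$ then immediately yields the claim.

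The form of the update strongly suggests the natural-metric quadratic surrogate
\begin{equation*}
Q(L; L_k) = \phi(L_k) + \text{tr}\bigl(\nabla\phi(L_k)(L - L_k)\bigr) - \tfrac{n}{2}\,\text{tr}\bigl(L_k^{-1}(L-L_k)\,L_k^{-1}(L-L_k)\bigr).
\end{equation*}
Its stationary-point equation $\nabla\phi(L_k) = n\,L_k^{-1}(L-L_k)L_k^{-1}$ rearranges to $L - L_k = \tfrac{1}{n} L_k \nabla\phi(L_k) L_k = L_k \Delta_k L_k$, i.e., exactly the iterate defined by~\eqref{eq:9}; properties (i) and (iii) are then immediate. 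The ``implicit change of variables'' alluded to in the text is most likely the affine substitution $L = L_k + L_k H L_k$ with $H$ symmetric, under which $Q$ becomes a pure quadratic in $H$ whose unconstrained maximizer is $H = \Delta_k$---this is also the coordinate system in which the iteration is manifestly a gradient step.

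The main obstacle is property (ii), the global minorization. By Taylor's theorem with the second-order integral remainder, $\phi(L) \geq Q(L; L_k)$ reduces to the pointwise bound $\nabla^2 \phi(L_t)[H, H] \geq -n\,\text{tr}(L_k^{-1} H L_k^{-1} H)$ for $H = L - L_k$ and $L_t = L_k + tH$, $t \in [0,1]$. Since
\begin{equation*}
\nabla^2 \phi(L)[H,H] = -\sum_i \text{tr}\bigl(L_{Y_i}^{-1} U_i^* H U_i L_{Y_i}^{-1} U_i^* H U_i\bigr) + n\,\text{tr}\bigl((I+L)^{-1} H (I+L)^{-1} H\bigr),
\end{equation*}
and the second piece is already non-negative, the required inequality reduces to bounding the compression sum $\sum_i \text{tr}(L_{t,Y_i}^{-1} U_i^* H U_i L_{t,Y_i}^{-1} U_i^* H U_i)$ by $n\,\text{tr}(L_k^{-1} H L_k^{-1} H)$ uniformly in $t$. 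The natural tool is the classical operator inequality $(U_i^* L U_i)^{-1} \preceq U_i^* L^{-1} U_i$, combined with the elementary fact that $0 \preceq A \preceq B$ implies $\text{tr}(AHAH) \leq \text{tr}(BHBH)$ for symmetric $H$ (since $\text{tr}(BHBH) - \text{tr}(AHAH) = \text{tr}(H(B-A)H(B+A)) \geq 0$). The delicate remaining step is to control $U_i^* L_t^{-1} U_i$ by $L_k^{-1}$ along the segment connecting $L_k$ and $L_{k+1}$, at which point (ii) follows and MM closes the argument. If this quadratic surrogate proves too blunt, an alternative is to pass to the dual variable $M = L^{-1}$, where a short Schur-complement calculation yields the cleaner symmetric form $\tilde\phi(M) = \sum_i \log\det M_{Y_i^c} - n\log\det(I+M)$ in which constructing a tight minorizer may be more tractable.
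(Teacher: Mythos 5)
Your surrogate does have the right maximizer: the stationarity computation showing that the quadratic $Q(\cdot\,;L_k)$ in the metric $\mathrm{tr}(L_k^{-1}\,\cdot\,L_k^{-1}\,\cdot\,)$ is maximized exactly at $L_{k+1}=L_k+L_k\Delta_kL_k$ is correct, so properties (i) and (iii) hold. The genuine gap is property (ii), which is the entire content of the theorem and which you leave open ("the delicate remaining step"). Worse, in the global form you state it, (ii) is simply false: take $N=1$ with every observation equal to $\{1\}$, so $\phi(x)=n\log x-n\log(1+x)$; as $x\to 0^+$ we have $\phi(x)\to-\infty$ while $Q(x;x_k)$ stays bounded, so $\phi\not\ge Q(\cdot\,;x_k)$ on the open cone. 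The objective has unbounded negative curvature near the boundary, so no fixed quadratic can minorize it globally. The argument could only be salvaged by proving the localized inequality $\phi(L_{k+1})\ge Q(L_{k+1};L_k)$, i.e.\ a curvature bound along the segment $[L_k,L_{k+1}]$, and your proposed tools do not close this: the compression inequality gives $(U_i^*L_tU_i)^{-1}\preceq U_i^*L_t^{-1}U_i$, but you then need to compare $\mathrm{tr}\bigl((U_i^*L_t^{-1}U_i)(U_i^*HU_i)(U_i^*L_t^{-1}U_i)(U_i^*HU_i)\bigr)$ with $\mathrm{tr}(L_k^{-1}HL_k^{-1}H)$, and neither $U_iU_i^*L_t^{-1}U_iU_i^*\preceq L_t^{-1}$ nor $L_t^{-1}\preceq L_k^{-1}$ holds in general (since $\Delta_k$ is not PSD, $L_{k+1}$ need not dominate $L_k$), so your trace-monotonicity lemma does not apply. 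As written, the proof does not go through.

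For contrast, the paper avoids any curvature comparison by changing variables to $S=L^{-1}$ and writing $\psi(S)=\log\det S+h(S)$ with $h(S)=\tfrac1n\sum_i\log\det(U_i^*S^{-1}U_i)-\log\det(I+S)$ convex (convexity of the first piece is Lemma~\ref{lemma:convexity}, proved via the matrix geometric mean). Linearizing only the convex part $h$ at $S_k$ and keeping the concave $\log\det S$ exact yields a CCCP-type auxiliary function that is a \emph{true} global minorant; solving its strictly concave first-order condition gives back precisely the update~\eqref{eq:9}, and monotonicity follows with no segment-wise Hessian estimates. Your closing remark about passing to $M=L^{-1}$ and the identity $\log\det L_Y=\log\det L+\log\det(L^{-1})_{Y^c}$ is in fact correct and is essentially the paper's change of variables; had you developed that fallback into a concave-plus-convex split and linearized the convex part, you would have arrived at the paper's argument. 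As submitted, however, the central minorization is unproven and its stated form is false, so the proposal has a real gap.
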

Before proving Theorem~\ref{thm:convergence} we need the following lemma.
\begin{lemma}
  \label{lemma:convexity}
  Let $U\in \C^{N\times k}$ ($k \le N$) such that $U^*U=I$. The map
  $g(S) := \log \det (U^* S^{-1} U)$ is convex on the set of positive definite matrices.
\end{lemma}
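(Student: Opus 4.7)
The plan is to rewrite $g(S) = \log\det(U^*S^{-1}U)$ in terms of a Schur complement, show that this Schur complement is matrix concave in $S$, and then combine this with the monotonicity and concavity of $\log\det$ on positive definite matrices.

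First, since $U \in \C^{N\times k}$ has orthonormal columns, extend it to a unitary matrix $[U,V] \in \C^{N\times N}$ where $V^*V = I$ and $U^*V = 0$ (the case $k=N$ is trivial since $g(S) = -\log\det S$ is convex). Because $[U,V]$ is unitary,
\[
  [U,V]^* S^{-1} [U,V] = \bigl([U,V]^* S [U,V]\bigr)^{-1}.
\]
Applying the block-inverse / Schur complement formula to the right-hand side and reading off the top-left block gives
\[
  U^* S^{-1} U = \bigl(H(S)\bigr)^{-1},\qquad H(S) := U^* S U - U^* S V (V^* S V)^{-1} V^* S U,
\]
where $V^*SV \succ 0$ since $S \succ 0$ and $V$ has full column rank. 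Consequently $g(S) = -\log\det H(S)$, and it suffices to show that $\log\det H(S)$ is concave in $S$.

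The key step is matrix concavity of the Schur complement $H$. I would use the variational identity: for any $X \in \C^{(N-k)\times k}$, expanding and completing the square yields
\[
  (U-VX)^* S (U-VX) - H(S) = (X - (V^*SV)^{-1}V^*SU)^*\, V^*SV\, (X - (V^*SV)^{-1}V^*SU) \succeq 0,
\]
with equality at the optimal $X$. Hence $H(S) = \min_X (U-VX)^*S(U-VX)$ in the Loewner order. For each fixed $X$ the map $S \mapsto (U-VX)^*S(U-VX)$ is linear in $S$, so for any $S_1,S_2 \succ 0$ and $\lambda\in[0,1]$, and any $X$,
\[
  (U-VX)^*(\lambda S_1 + (1-\lambda)S_2)(U-VX) \succeq \lambda H(S_1) + (1-\lambda) H(S_2).
\]
Taking the minimum over $X$ on the left yields $H(\lambda S_1 + (1-\lambda)S_2) \succeq \lambda H(S_1) + (1-\lambda) H(S_2)$, i.e., $H$ is matrix concave on positive definite $S$.

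Finally, I would invoke two standard facts about $\log\det$ on the positive definite cone: (i) monotonicity, $A\preceq B \Rightarrow \log\det A \le \log\det B$, and (ii) concavity. Applying (i) to the concavity inequality for $H$ gives $\log\det H(\lambda S_1 + (1-\lambda)S_2) \ge \log\det(\lambda H(S_1) + (1-\lambda) H(S_2))$, and applying (ii) to the right-hand side gives $\log\det(\lambda H(S_1)+(1-\lambda) H(S_2)) \ge \lambda\log\det H(S_1) + (1-\lambda)\log\det H(S_2)$. Chaining these yields concavity of $S\mapsto \log\det H(S)$, hence convexity of $g(S) = -\log\det H(S)$. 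The main obstacle is the matrix concavity of $H$, since a naive attempt based only on matrix convexity of $S\mapsto S^{-1}$ runs in the wrong direction once composed with $\log\det$; the variational representation neatly sidesteps this issue by writing $H$ as a Loewner-order minimum of linear functions.
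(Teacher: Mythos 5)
Your proof is correct, but it follows a genuinely different route from the paper's. The paper establishes midpoint convexity via operator geometric means: it uses the matrix AM--GM inequality $X\#Y \preceq \tfrac{X+Y}{2}$, the identity $(X\#Y)^{-1} = X^{-1}\#Y^{-1}$, the compression inequality $U^*(A\#B)U \preceq (U^*AU)\#(U^*BU)$, and the determinant formula $\det(A\#B)=\sqrt{\det A}\sqrt{\det B}$, then upgrades midpoint convexity to convexity by continuity. You instead complete $U$ to a unitary $[U,V]$, identify $U^*S^{-1}U$ as the inverse of the Schur complement $H(S) = U^*SU - U^*SV(V^*SV)^{-1}V^*SU$, and prove matrix concavity of $H$ through its variational characterization $H(S) = \min_X\,(U-VX)^*S(U-VX)$ (a Loewner-order minimum of maps linear in $S$, with the minimum attained at $X=(V^*SV)^{-1}V^*SU$ by completing the square); monotonicity and concavity of $\log\det$ then give concavity of $\log\det H(S)$, i.e., convexity of $g$. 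Your chain of inequalities is sound, the $k=N$ edge case is handled, and $H(S)\succ 0$ holds since $H(S)=(U^*S^{-1}U)^{-1}$. What each approach buys: yours is more elementary and self-contained (no geometric-mean machinery, no continuity argument, full convexity obtained directly for arbitrary $\lambda\in[0,1]$), while the paper's is shorter modulo the cited operator inequalities from Bhatia and fits the matrix-means toolkit it already invokes elsewhere.
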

\begin{proof}
Since $g$ is continuous it suffices to establish midpoint convexity. Consider therefore, $X,Y \succ 0$ and let \[X\#Y = X^{1/2}(X^{-1/2}YX^{-1/2})^{1/2}X^{1/2}\] be their geometric mean. The operator inequality $X\#Y \preceq \frac{X+Y}{2}$ is well-known~\citep[Thm.~4.1.3]{bhatia07}. Hence,
\begin{align*}
  \left(\tfrac{X+Y}{2}\right)^{-1} &\preceq (X\#Y)^{-1} = X^{-1}\#Y^{-1} \\
  U^*\left(\tfrac{X+Y}{2}\right)^{-1}U &\preceq U^*(X^{-1}\#Y^{-1}) U \\
  &\preceq (U^*X^{-1}U)\#(U^*Y^{-1}U),
\end{align*}
where equality follows from~\citep[Thm.~4.1.3]{bhatia07}, and the final inequality follows from~\citep[Thm.~4.1.5]{bhatia07}\footnote{For an explicit proof see~\citep[Thm.~8]{gopt}.}
Since $\log \det$ is monotonic on positive definite matrices and since $\det (A\#B) = \sqrt{\det A}\sqrt{\det B}$, it then follows that
\begin{equation*}
  \begin{split}
    \log \det\bigl( U^*\left(\tfrac{X+Y}{2}\right)^{-1}U\bigr) &\leq \tfrac 1 2 \log \det (U^*X^{-1}U) \\
    &+ \tfrac 1 2 \log \det (U^*Y^{-1}U),
\end{split}
\end{equation*}
which proves the lemma.
\end{proof}

Now we are ready to prove Theorem~\ref{thm:convergence}.
\begin{proof}[\textbf{Proof (Thm.~\ref{thm:convergence})}]
The key insight is to consider $S = L^{-1}$ instead of $L$; this change is only for the analysis---the actual iteration that we implement is still \eqref{eq:9}.\footnote{Our previous proof was based on viewing iteration~\eqref{eq:9} as a scaled-gradient-like iteration. However, we find the present version more transparent for proving monotonicity.}

Writing $\psi(S) := \phi(L)$, we see that $\psi(S)$ equals
\begin{align*}
   &\tfrac 1 n \nlsum_i \log \det (U_i^* S^{-1} U_i) - \log \det (S^{-1}+I)\\
   &= \log \det(S) + \tfrac 1 n \nlsum_i \log \det (U_i^* S^{-1} U_i) \\
  &\phantom{=~}-\log \det (I+S).
\end{align*}
Let $h(S) = \tfrac 1 n \sum_i \log \det (U_i^* S^{-1} U_i) - \log \det (I+S)$, and $f(S) = \log \det(S)$. Clearly, $f$ is concave in $S$, while  $h$ is convex is $S$; the latter from Lemma~\ref{lemma:convexity} and the fact that $-\log\det(I+S)$ is convex. This observation allows us to invoke iterative bound-optimization (an idea that underlies EM, CCCP, and other related algorithms). 

In particular, we construct an \emph{auxiliary function} $\xi$ so that
\begin{equation*}
  \begin{split}
    \psi(S) &\ge \xi(S,R),\qquad\forall S, R \succ 0,\\
    \psi(S) &=   \xi(S,S),\qquad\forall S \succ 0.
  \end{split}
\end{equation*}
As in~\citep{CCCP}, we select $\xi$ by exploiting the convexity of $h$: as $h(S) \ge h(R) + \ip{\nabla h(R)}{S-R}$, we simply set
\begin{equation*}
  \xi(S,R) := f(S) + h(R) + \scal{\nabla h(R)}{S-R}.
\end{equation*}
Given an iterate $S_k$, we then obtain $S_{k+1}$ by solving
\begin{equation}
  \label{eq:10}
  S_{k+1} := \argmax_{S \succ 0}\ \xi(S,S_k),
\end{equation}
which clearly ensures monotonicity: $\psi(S_{k+1}) \ge \psi(S_k)$.

Since~\eqref{eq:10} has an open set as a constraint and $\xi(S,\cdot)$ is strictly concave, to solve~\eqref{eq:10} it suffices to solve the necessary condition $\nabla_S\xi(S,S_k)=0$. This amounts to
\begin{align*}
  &S^{-1} =~(I+S_k)^{-1} +\tfrac 1 n \nlsum_i S_k^{-1} U_i (U_i^*S_k^{-1}U_i)^{-1}U_i^* S_k^{-1}.
\end{align*}
Rewriting in terms of $L$ we immediately see that with $L_{k+1} = L_k + L_k \Delta_k L_k$, $\phi(L_{k+1}) \ge \phi(L_k)$ (the inequality is strict unless $L_{k+1}=L_k$).
\end{proof}

Theorem~\ref{thm:convergence} shows that iteration~\eqref{eq:9} is well-defined (positive definiteness was established by Prop.~\ref{prop.pd}). The fixed-point formulation~\eqref{eq:9} actually suggests a broader iteration, with an additional step-size $a$:
\begin{equation}
  \label{eq:iter-general}
  L_{k+1}  = L_k + a L_k \Delta_k L_k.
\end{equation}
Above we showed that for $a=1$ ascent is guaranteed. Empirically, $a > 1$ often works well; Prop.~\ref{prop.abound} presents an easily computable upper bound on feasible $a$. We conjecture that for all feasible values $a \geq 1$, iteration~\eqref{eq:9} is guaranteed to increase the log-likelihood.

Moreover, all previous calculations can be redone in the context where $L = F^*WF$ for a fixed feature matrix $F$ in order to learn the weight matrix $W$ (under the assumption that $S^*S$ is invertible), making our approach also useful in the context of feature-based DPP learning.

Pseudocode of our resulting learning method is presented in Algorithms~\ref{algo:dpp} and \ref{algo:fixed-point}. For simplicity, we recommend using a fixed value of $a$ (which can be set at initialization).

\begin{algorithm}[ht]\small
   \caption{Picard Iteration}
   \label{algo:dpp}
\begin{algorithmic}
   \STATE {\bfseries Input:} Matrix $L$, training set $T$, step-size $a > 0$.
   \FOR{$i=1$ {\bfseries to} maxIter}
   \STATE $L \longleftarrow$ FixedPointMap($L$, $T$, $a$)
   \IF{stop($L$, $T$, $i$)} 
   \STATE break
   \ENDIF
   \ENDFOR

   \textbf{return} $L$
\end{algorithmic}
\end{algorithm}

\begin{algorithm}[ht]\small
   \caption{FixedPointMap}
   \label{algo:fixed-point}
\begin{algorithmic}
   \STATE {\bfseries Input:} Matrix $L$, training set $T$, step-size $a > 0$
   \STATE $Z \longleftarrow 0$
   \FOR{$Y$ {\bfseries in} $T$}
   \STATE $Z_Y = Z_Y + L_Y^{-1}$
   \ENDFOR

   \textbf{return} $L+a L (Z / |T| - (L +I)^{-1}) L$  
\end{algorithmic}
\end{algorithm}

\subsection{Iteration cost and convergence speed}
The cost of each iteration of our algorithm is dominated by the computation of $\Delta$, which costs a total of $O(\sum_{i=1}^n |Y_i|^3 + N^3) = O(n\kappa^3 + N^3)$ arithmetic operations, where $\kappa = \max_i|Y_i|$; the $O(|Y_i|^3)$ cost comes from the time required to compute the inverse $L_{Y_i}^{-1}$, while the $N^3$ cost stems from computing $\inv{(I+L)}$. Moreover, additional $N^3$ costs arise when computing $L\Delta L$. 

In comparison, each iteration of the method of \citet{gillen} costs $O(nN\kappa^2+N^3)$, which is comparable to, though slightly greater than $O(n\kappa^3+N^3)$ as $N \ge \kappa$. In applications where the sizes of the sampled subsets satisfy $\kappa \ll N$, the difference can be more substantial. Moreover, we do not need any eigenvalue/vector computations to implement our algorithm.

Finally, our iteration also runs slightly faster than the K-Ascent iteration, which costs $O(nN^3)$. Additionally, similarly to EM, our algorithm avoids the projection step necessary in the K-Ascent algorithm (which ensures $K \in \set{X : 0 \preceq X \preceq I}$). As shown in~\citep{gillen}, avoiding this step helps learn non-diagonal matrices. 

We note in passing that similar to EM, assuming a non-singular local maximum, we can also obtain a local linear rate of convergence. This follows by relating iteration~(\ref{eq:9}) to scaled-gradient methods~\citep[\S1.3]{bertsekas99} (except that we have an implicit PSD  constraint). 

\section{Experimental results}
\label{sec:expt}
We compare performance of our algorithm, referred to as \emph{Picard iteration}\footnote{Our nomenclature stems from the usual name for such iterations in fixed-point theory~\citep{granas03}.}, against the EM algorithm presented in~\citet{gillen}. We experiment on both synthetic\footnote{The figures and tables for the synthetic results have been modified to include some minor corrections: in particular, Tables~\ref{table:comp-basic} and~\ref{table:comp-wishart} now show the runtime to 99\%. The runtimes were initially to final convergence, but erroneously reported to be to 95\%.} and real-world data.

For real-world data, we use the baby registry test on which results are reported in~\citep{gillen}. This dataset consists in $111,006$ sub-registries describing items across 13 different categories; this dataset was obtained by collecting baby registries from \texttt{amazon.com}, all containing between 5 and 100 products, and then splitting each registry into subregistries according to which of the 13 categories (such as ``feeding'', ``diapers'', ``toys'', etc.) each product in the registry belongs to. \citep{gillen} provides a more in-depth description of this dataset.

These sub-registries are used to learn a DPP capable of providing  recommendations for these products: indeed, a DPP is well-suited for this task as it provides sets of products in a category that are popular yet diverse enough to all be of interest to a potential customer.

\subsection{Implementation details}
We measure convergence by testing the relative change $\frac {|\phi(L_{k+1}) - \phi(L_k)|} {|\phi(L_k)|} \leq \varepsilon$. We used a tighter convergence criterion for our algorithm ($\varepsilon_{\text{pic}} = 0.5 \cdot \varepsilon_{\text{em}}$) to account for the fact that the distance between two subsequent log-likelihoods tends to be smaller for the Picard iteration than for EM.

The parameter $a$ for Picard was set at the beginning of each experiment and never modified as it remained valid throughout each test\footnote{Although it was not necessary in our experiments, if the parameter $a$ becomes invalid, it can be halved until it reaches 1.}. In EM, the step size was initially set to 1 and halved when necessary, as per the algorithm described in~\citep{gillen}; we used the code of~\citet{gillen} for our EM implementation\footnote{These experiments were run with MATLAB, on a Linux Mint system, using 16GB of RAM and an i7-4710HQ CPU @ 2.50GHz.}.

\begin{figure*}[!ht]
  \centering
  \subfigure[$N = 50$]{\includegraphics[width=.3\textwidth]{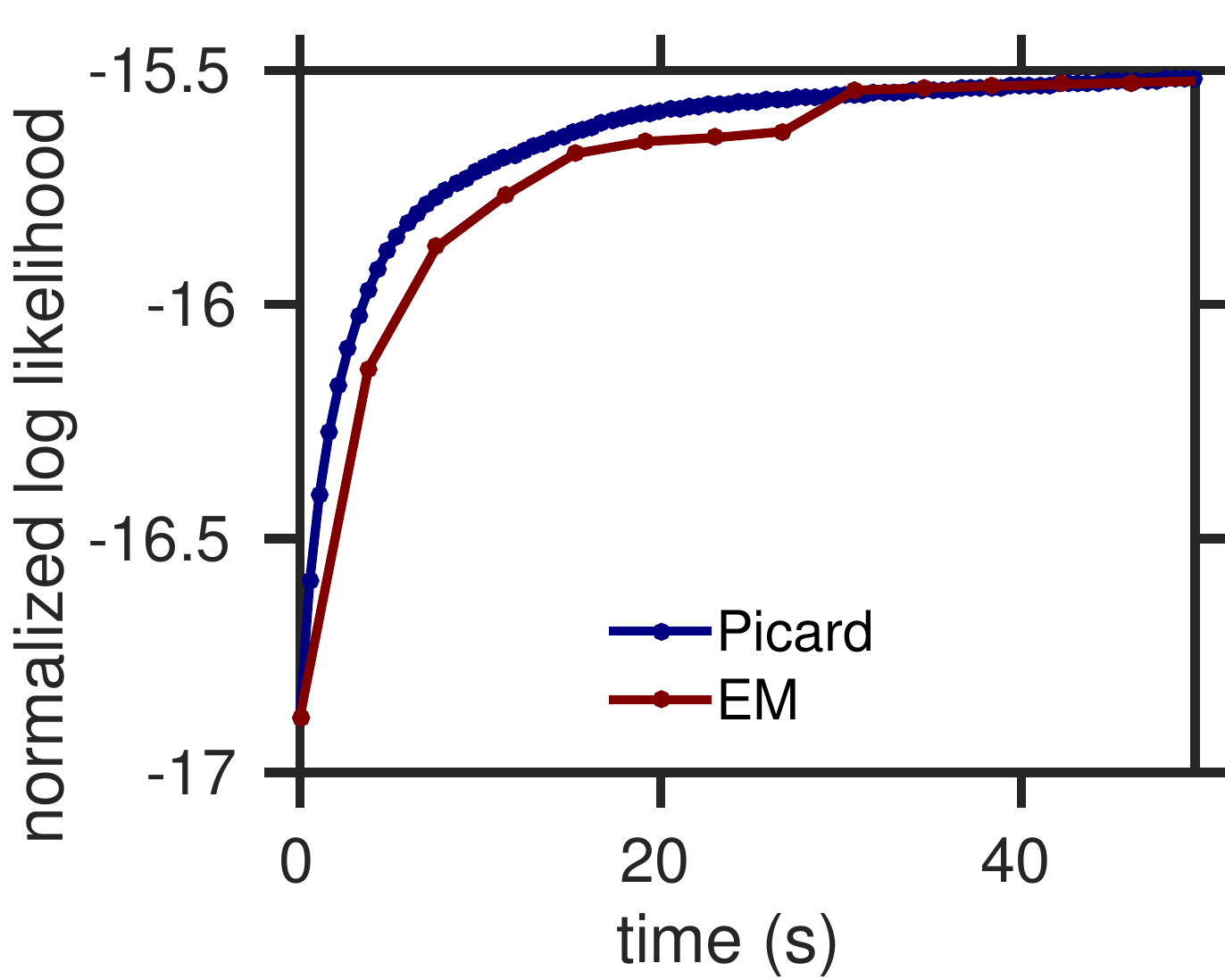}}
  \subfigure[$N = 100$]{\includegraphics[width=.3\textwidth]{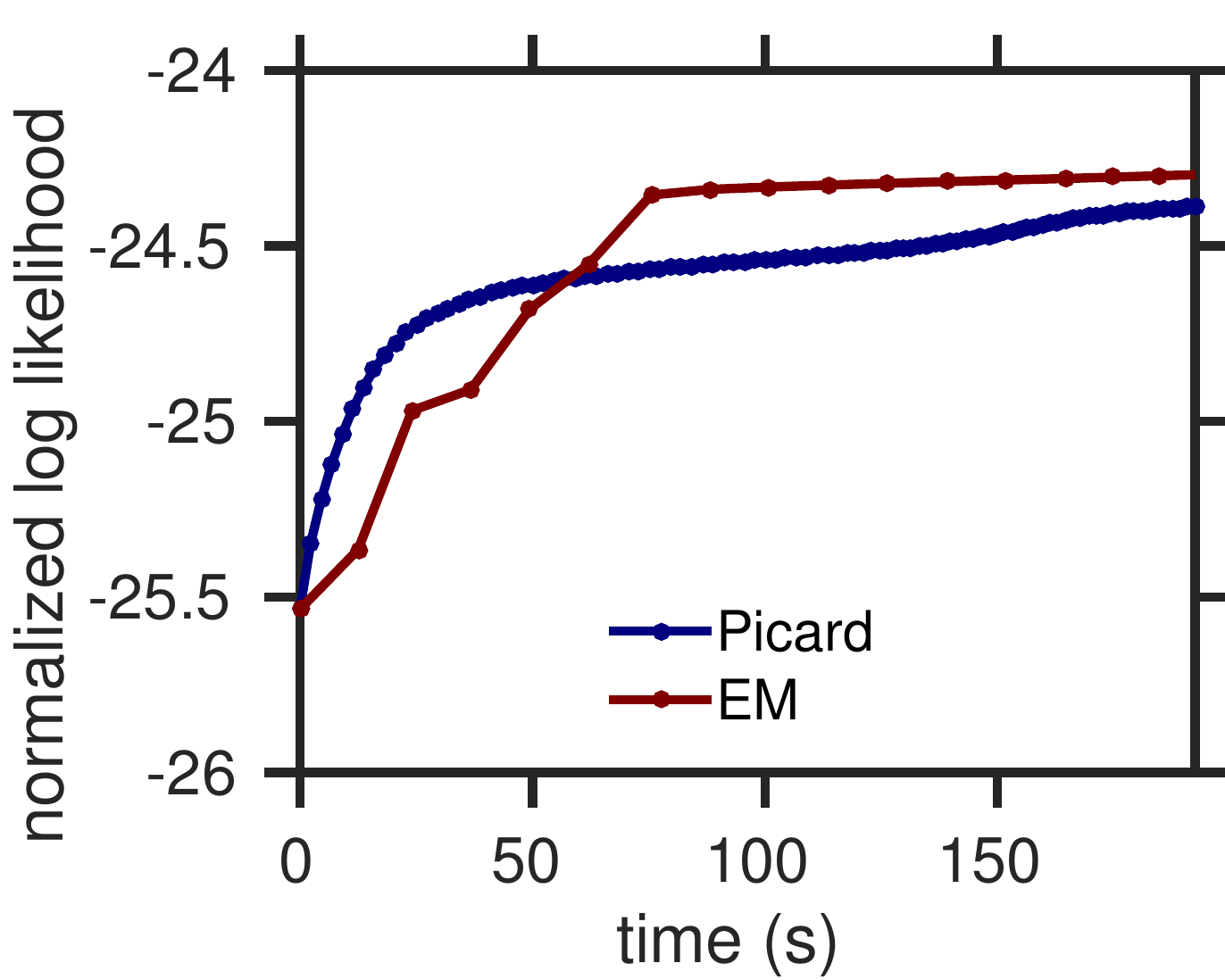}}
  \subfigure[$N = 150$]{\includegraphics[width=.3\textwidth]{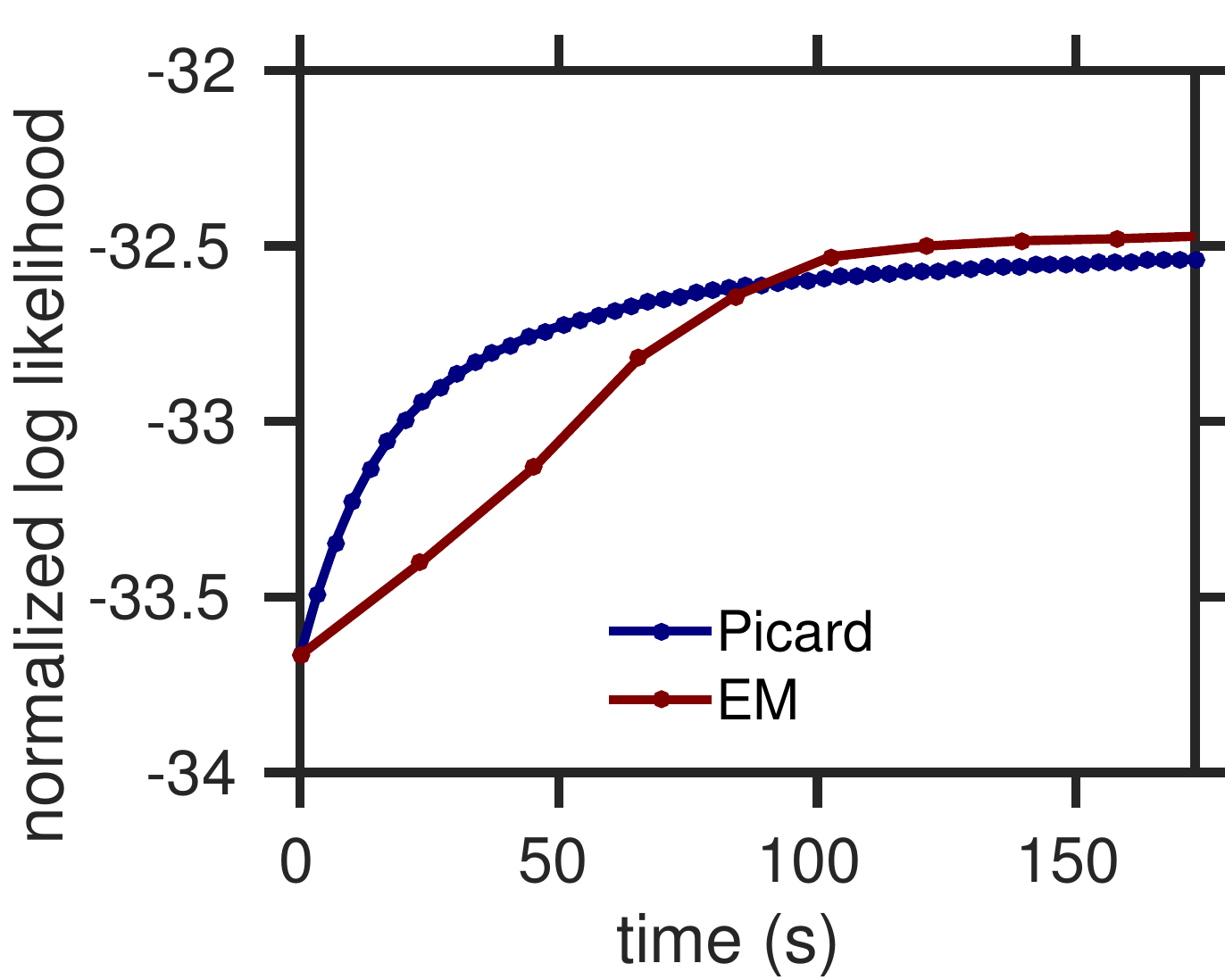}} 
  \caption{Normalized log-likelihood as a function of time for various set sizes $N$, with $n = 5000$ and $a = 5$ using the \texttt{BASIC} random distribution.}
  \label{fig:ll-time-set-size}
\end{figure*}
\begin{figure*}[!ht]
  \centering
  \subfigure[$n = 5000$]{\includegraphics[width=.3\textwidth]{temp_log_time_50_5000_5.pdf}}
  \subfigure[$n = 10,000$]{\includegraphics[width=.3\textwidth]{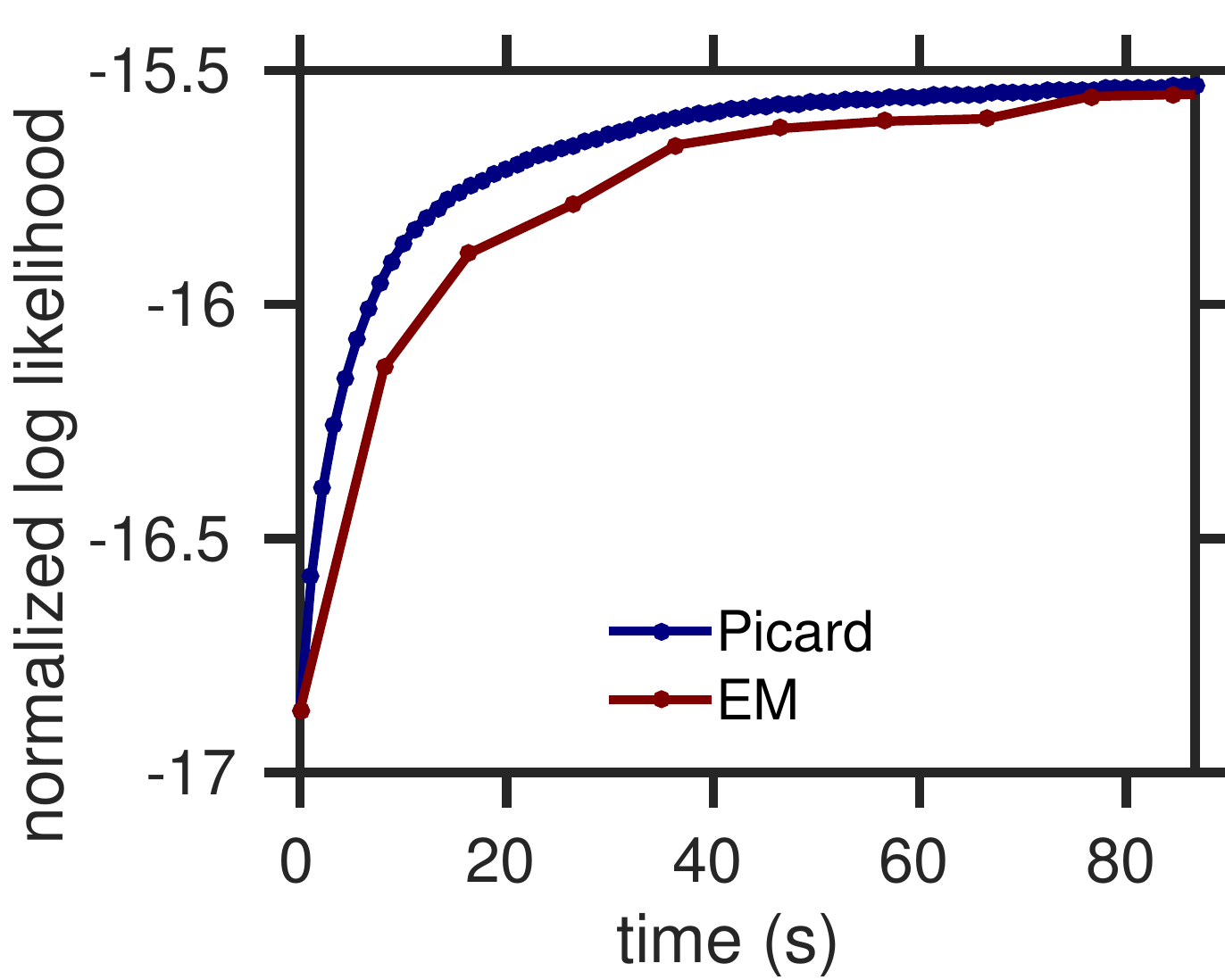}}
  \subfigure[$n = 15,000$]{\includegraphics[width=.3\textwidth]{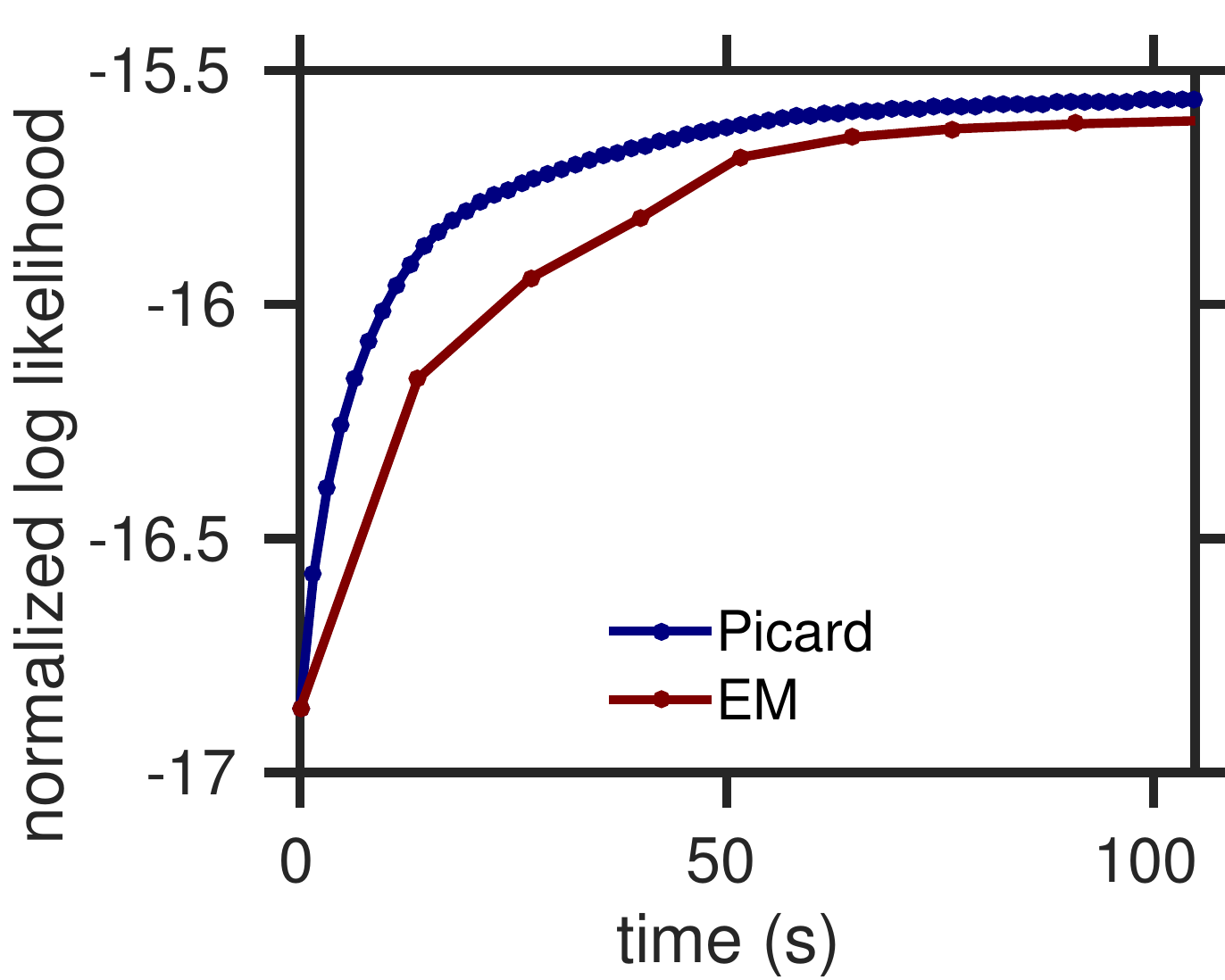}}
  \caption{Normalized log likelihood as a function of time for various numbers of training sets, with $N = 50$ and $a = 5$ using the \texttt{BASIC} random distribution.}
  \label{fig:ll-time-training-size}
\end{figure*}
\begin{figure*}[!ht]
  \centering
  \subfigure[$a = 1$]{\includegraphics[width=.3\textwidth]{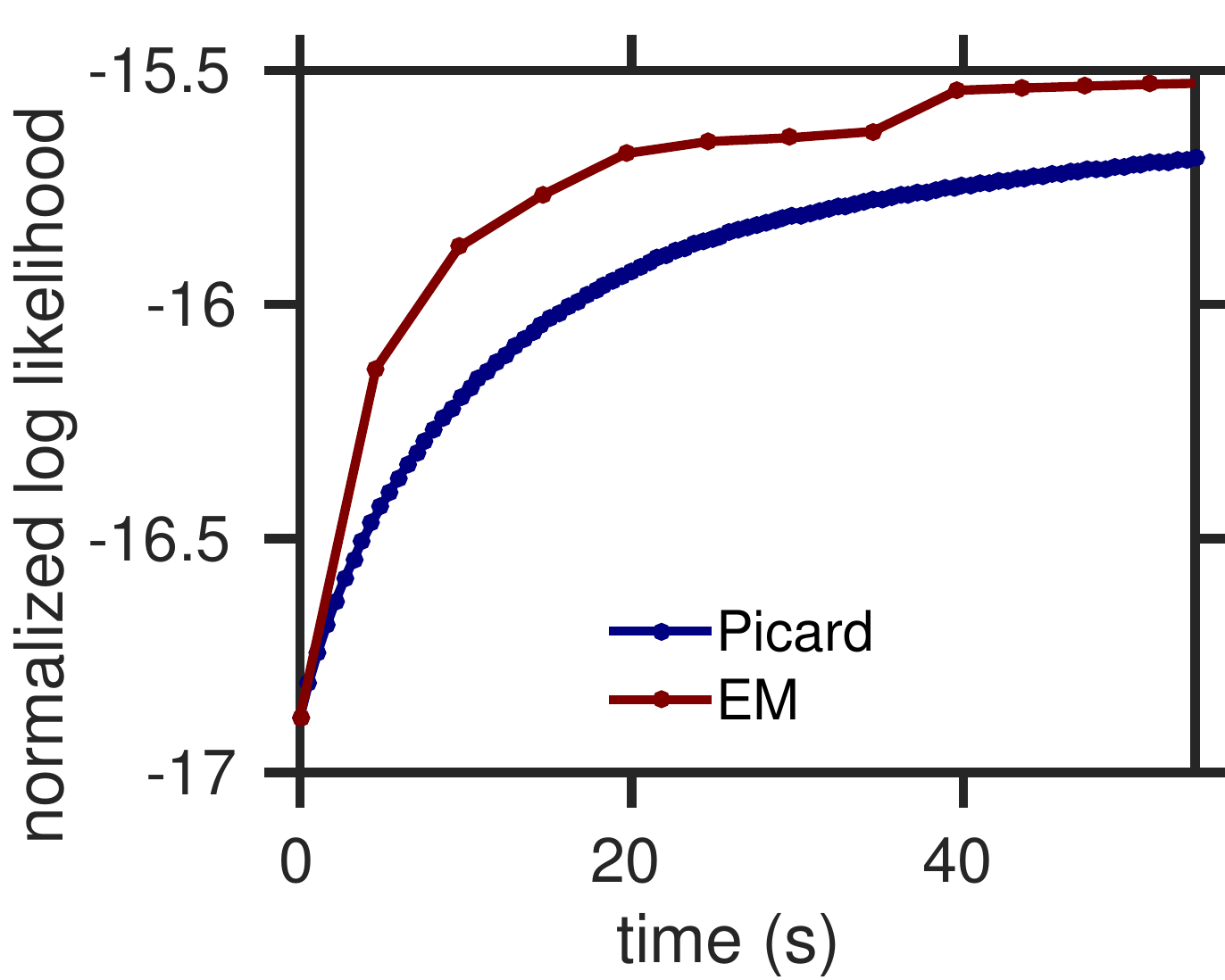}}
  \subfigure[$a = 5$]{\includegraphics[width=.3\textwidth]{temp_log_time_50_5000_5.pdf}}
  \subfigure[$a = 10$]{\includegraphics[width=.3\textwidth]{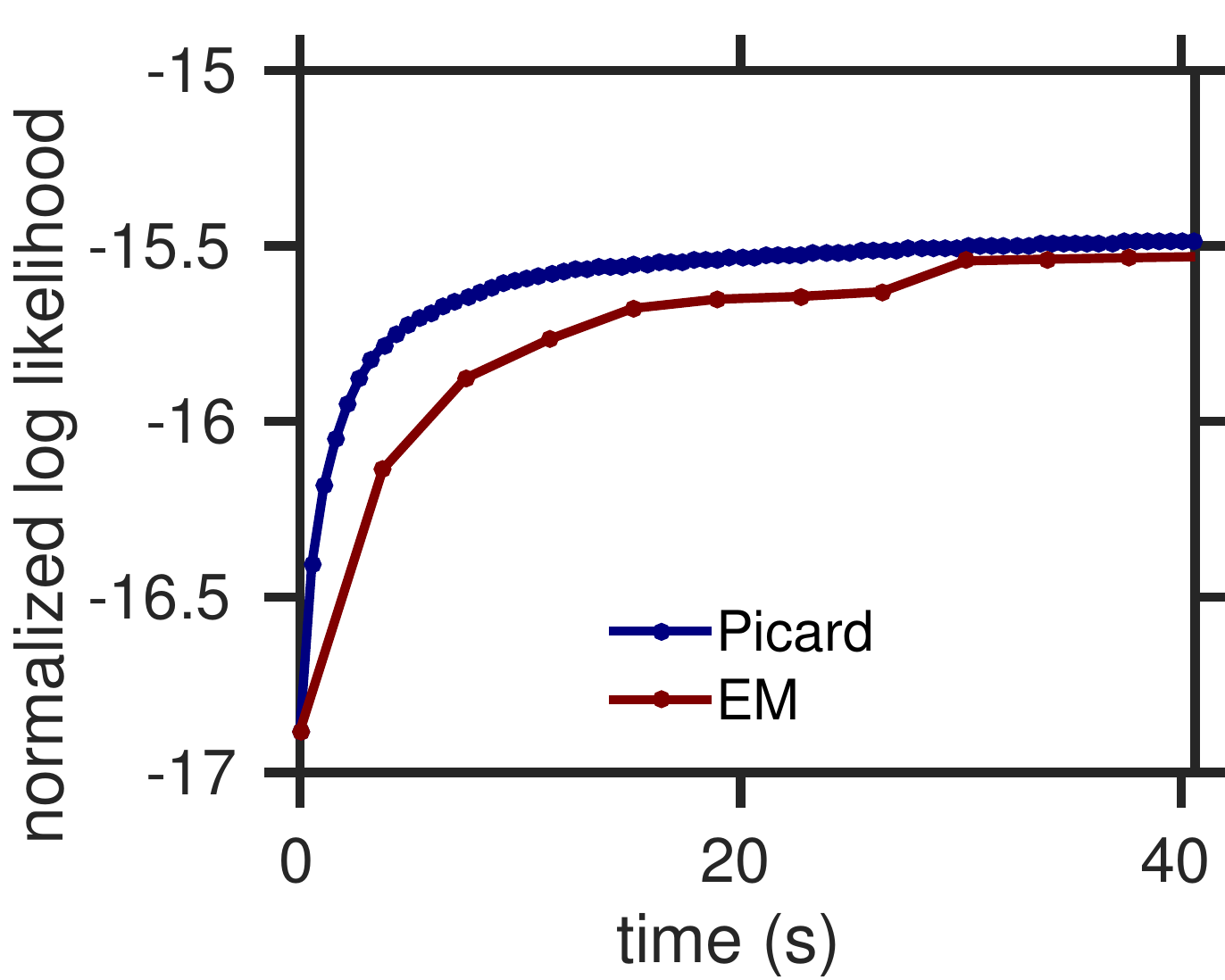}}
  \caption{Normalized log likelihood as a function of time for different values of $a$, with $N = 50$ and $n = 5000$ using the \texttt{BASIC} random distribution.}
  \label{fig:ll-time-a}
\end{figure*}

\subsection{Synthetic tests}
In each experiment, we sample $n$ training sets from a base DPP of size $N$, then learn the DPP using EM and the Picard iteration. We initialize the learning process with a random positive definite matrix $L_0$ (or $K_0$ for EM) drawn from the same distribution as the true DPP kernel.

Specifically, we used two matrix distributions to draw the true kernel and the initial matrix values from:
\begin{itemize}
\setlength{\itemsep}{0pt}
\item \texttt{BASIC}: We draw the coefficients of a matrix $M$ from the uniform distribution over $\left[0, \sqrt 2\right]$, then return $L = MM^\top$ conditioned on its positive definiteness.
\item \texttt{WISHART}:  We draw $L$ from a Wishart distribution with N degrees of freedom and an identity covariance matrix, and rescale it with a factor $\frac 1 N$.
\end{itemize}

Figures~\ref{fig:ll-time-set-size},~\ref{fig:ll-time-training-size} and~\ref{fig:ll-time-a} show the log-likelihood as a function of time for different parameter values when both the true DPP kernel and the initial matrix $L_0$ were drawn from the \texttt{BASIC} distribution. Tables~\ref{table:comp-basic} and~\ref{table:comp-wishart} show the final log-likelihood and the time necessary for each method to reach 99\% of the optimal log likelihood for both distributions and parameters $n=5000$, $a=5$.

As shown in Figure~\ref{fig:ll-time-set-size}, the difference in time necessary for both methods to reach a good approximation of the final likelihood (as defined by best final likelihood) grows drastically as the size $N$ of the set of all elements $\set{1,2,\ldots,N}$ increases. Figure~\ref{fig:ll-time-training-size} illustrates the same phenomenon when $N$ is kept constant and $n$ increases.  

Finally, the influence of the parameter $a$ on convergence speed is illustrated in Figure~\ref{fig:ll-time-a}\footnote{In the cases where $a > 1$, a safeguard was added to check that the matrices returned by our algorithm were positive definite.}. Increasing $a$ noticeably increases Picard's convergence speed, as long as the matrices remain positive definite during the Picard iteration.
 
\begin{table}[htbp]
\caption{Final log-likelihoods and time necessary for an iteration to reach 99\% of the optimal log likelihood for both algorithms when using \texttt{BASIC} distribution for true and initialization matrices (training set size of 5,000, $a = 5$).}
\label{table:comp-basic}
\vskip 0.15in
\begin{center}
\begin{small}
\begin{sc}
\begin{tabular}{| l |c | c|| c |c |}
\cline{2-5}
\multicolumn{1}{l|}{} &  \multicolumn{2}{|c||}{Log-Likelihood} &  \multicolumn{2}{|c|}{Runtime to 99\%} \\ \cline{2-5}
\multicolumn{1}{l|}{} & Picard & EM & Picard & EM \\ \hline
$N=50$  & -15.5  & -15.5 & 17.3s   & 30.7s  \\ 
$N=100$ & -24.4  & -24.2 & 143s    & 75.5s  \\ 
$N=150$ & -32.5  & -32.5 & 40.7s   & 84.0s  \\ 
$N=200$ & -40.8  & -41.2 & 51.1s   & 1,730s \\ 
$N=250$	& -45.7  & -46.0 & 99.1s   & 2,850s \\  \hline
\end{tabular}
\end{sc}
\end{small}
\end{center}
\vskip -0.1in
\end{table}

\begin{table}[htbp]
\caption{Final log-likelihoods and time necessary for an iteration to reach 99\% of the optimal log likelihood for both algorithms when using \texttt{WISHART} distribution for true and initialization matrices (training set size of 5,000, $a = 5$).}
\label{table:comp-wishart}
\vskip 0.15in
\begin{center}
\begin{small}
\begin{sc}

\begin{tabular}{| l |c | c|| c |c |}
\cline{2-5}
\multicolumn{1}{l|}{} &  \multicolumn{2}{|c||}{Log-Likelihood} &  \multicolumn{2}{|c|}{Runtime to 99\%} \\ \cline{2-5}
\multicolumn{1}{l|}{} & Picard & EM & Picard & EM \\ \hline
$N=50$	& -33.0  & -33.1  & 0.2s   & 2.0s  \\ 
$N=100$	& -66.2  & -66.2  &  0.5s  & 3.6s  \\ 
$N=150$ & -99.2  & -99.3  & 0.8s  & 5.2s  \\ 
$N=200$ & -132.1 & -132.4 & 1.2s & 8.9s  \\ 
$N=250$	& -165.1 & -165.7 & 2.5s & 11s \\  \hline
\end{tabular}
\end{sc}
\end{small}
\end{center}
\vskip -0.1in
\end{table}

The greatest strength of the Picard iteration lies in its initial rapid convergence: the log-likelihood increases significantly faster for the Picard iteration than for EM. Although for small datasets EM sometimes performs better, our algorithm provides substantially better results in shorter timeframes when dealing with larger datasets.

Overall, our algorithm converges to 99\% of the optimal log-likelihood (defined as the maximum of the log-likelihoods returned by each algorithm) significantly faster than the EM algorithm for both distributions, particularly when dealing with large values of $N$.

Thus, the Picard iteration is preferable when dealing with large ground sets; it is also very well-suited to cases where larger amounts of training data are available.

\subsection{Baby registries experiment}
We tested our implementation on all 13 product categories in the baby registry dataset, using two different initializations:

\begin{itemize}
\vspace*{-15pt}
\setlength{\itemsep}{-5pt}
\item the aforementioned Wishart distribution
\item the data-dependent moment matching initialization (MM) described in~\citep{gillen}
\end{itemize}

In each case, 70\% of the baby registries in the product category were used for training; 30\% served as test. The results presented in Figures~\ref{fig:baby-test-wishart} and~\ref{fig:baby-test-mm} are averaged over 5 learning trials, each with different initial matrices; the parameter $a$ was set equal to 1.3 for all iterations.

\begin{table}[ht]
\caption{Comparison of final log-likelihoods for both algorithms; relative closeness between Picard and EM: $\delta = |\phi_{\text{em}} - \phi_{\text{pic}}|/ \phi_{\text{em}} $.}
\label{table:final-ll-comp}
\vskip 0.15in
\begin{center}
\begin{small}
\begin{sc}
\begin{tabular}{| l |c | c |}
\hline
Category  & $\delta$ (Wishart) & $\delta$ (MM) \\ \hline
furniture & 4.4e-02 & 1.2e-03 \\ 
carseats  & 3.7e-02 & 7.6e-04 \\ 
safety    & 3.3e-02 & 8.0e-04 \\ 
strollers & 3.9e-02 & 3.0e-03 \\ 
media     & 2.3e-02 & 2.4e-03 \\ 
health    & 2.6e-02 & 7.4e-03 \\ 
toys      & 2.0e-02 & 5.9e-03 \\ 
bath      & 2.6e-02 & 2.9e-03 \\ 
apparel   & 9.2e-03 & 4.3e-03 \\ 
bedding   & 1.3e-02 & 7.6e-03 \\ 
diaper    & 7.2e-03 & 5.3e-03 \\ 
gear      & 2.3e-03 & 9.0e-03 \\ 
feeding   & 4.9e-04 & 2.1e-03 \\ \hline
\end{tabular}
\end{sc}
\end{small}
\end{center}
\vskip -0.1in
\end{table}

\begin{figure*}[t]
  \centering
  \subfigure[Final negative log-likelihood]{\includegraphics[width=.45\textwidth]{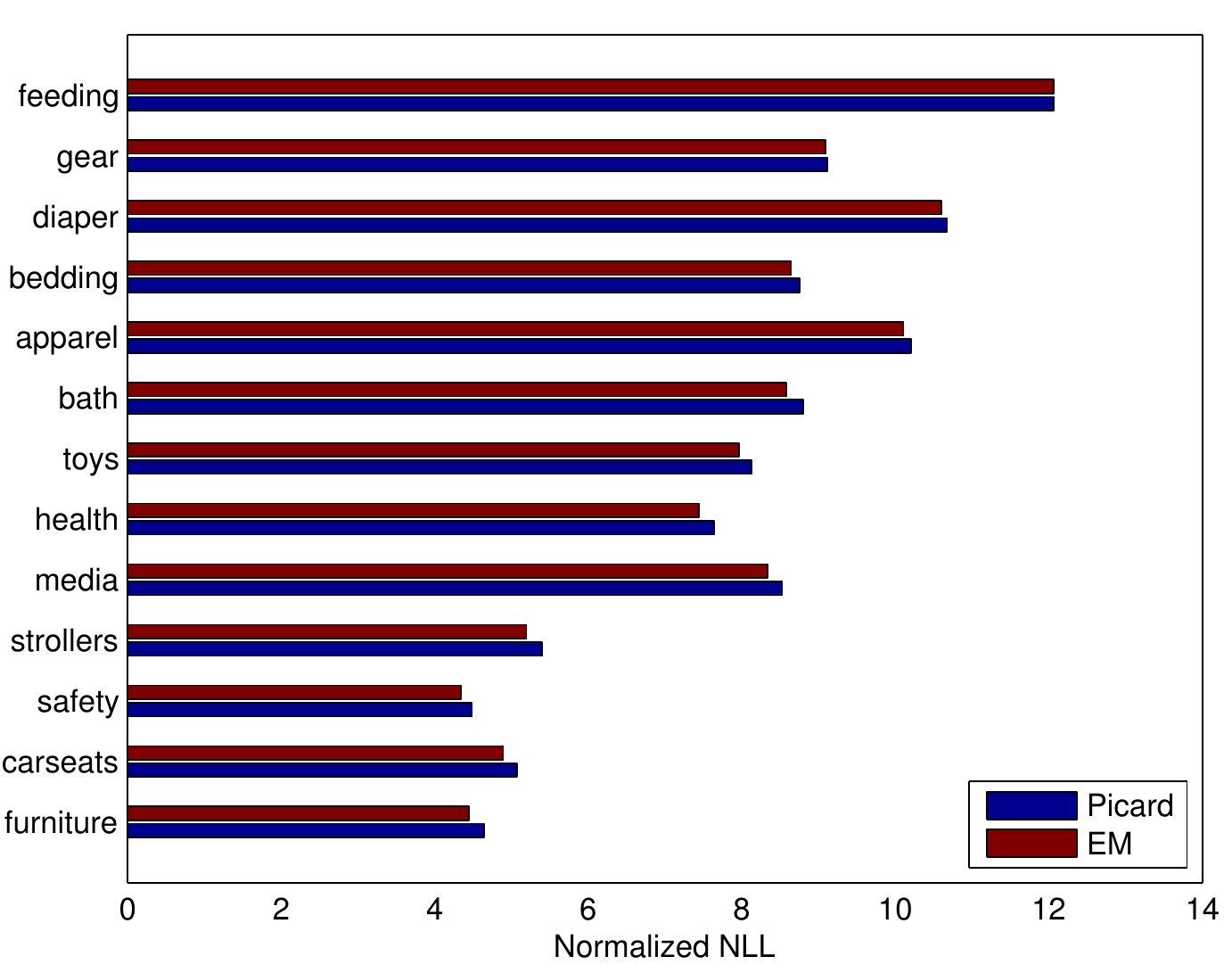}}
  \subfigure[Runtime]{\includegraphics[width=.45\textwidth]{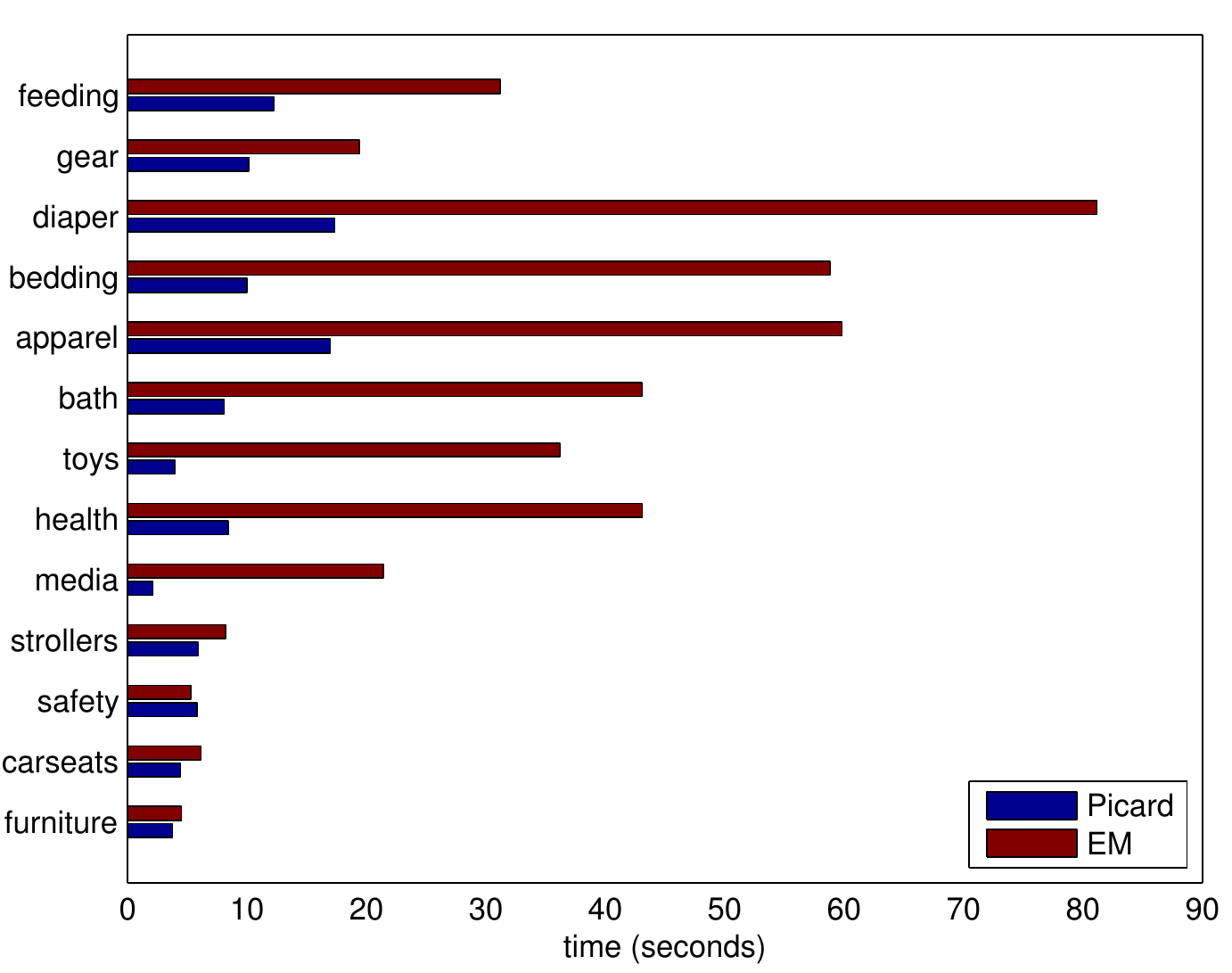}}
  \caption{Evaluation of EM and the Picard iteration on the baby registries dataset using Wishart initialization.}
  \label{fig:baby-test-wishart}
\end{figure*}

\begin{figure*}[!ht]
  \centering
  \subfigure[Final negative log-likelihood]{\includegraphics[width=.45\textwidth]{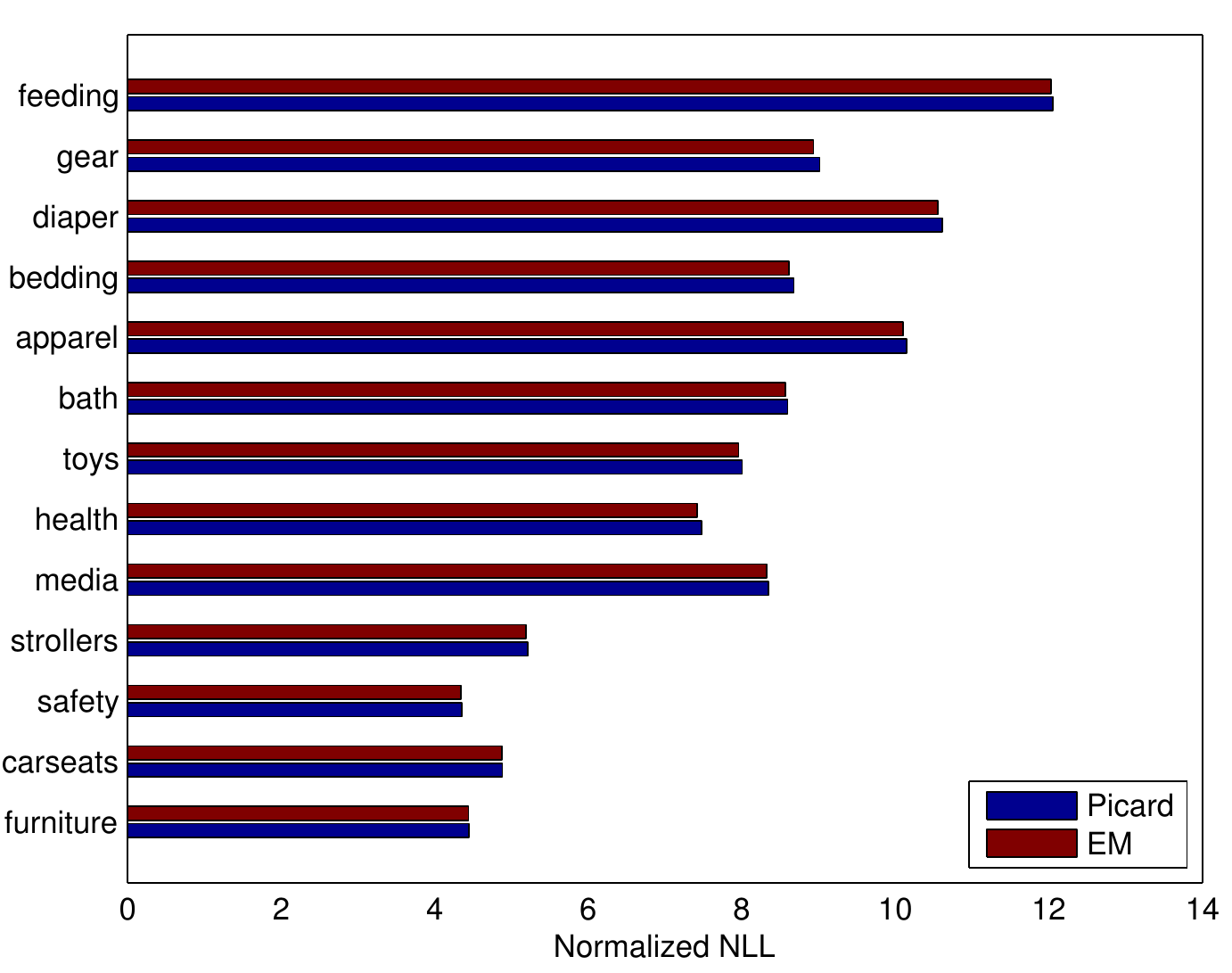}}
  \subfigure[Runtime]{\includegraphics[width=.45\textwidth]{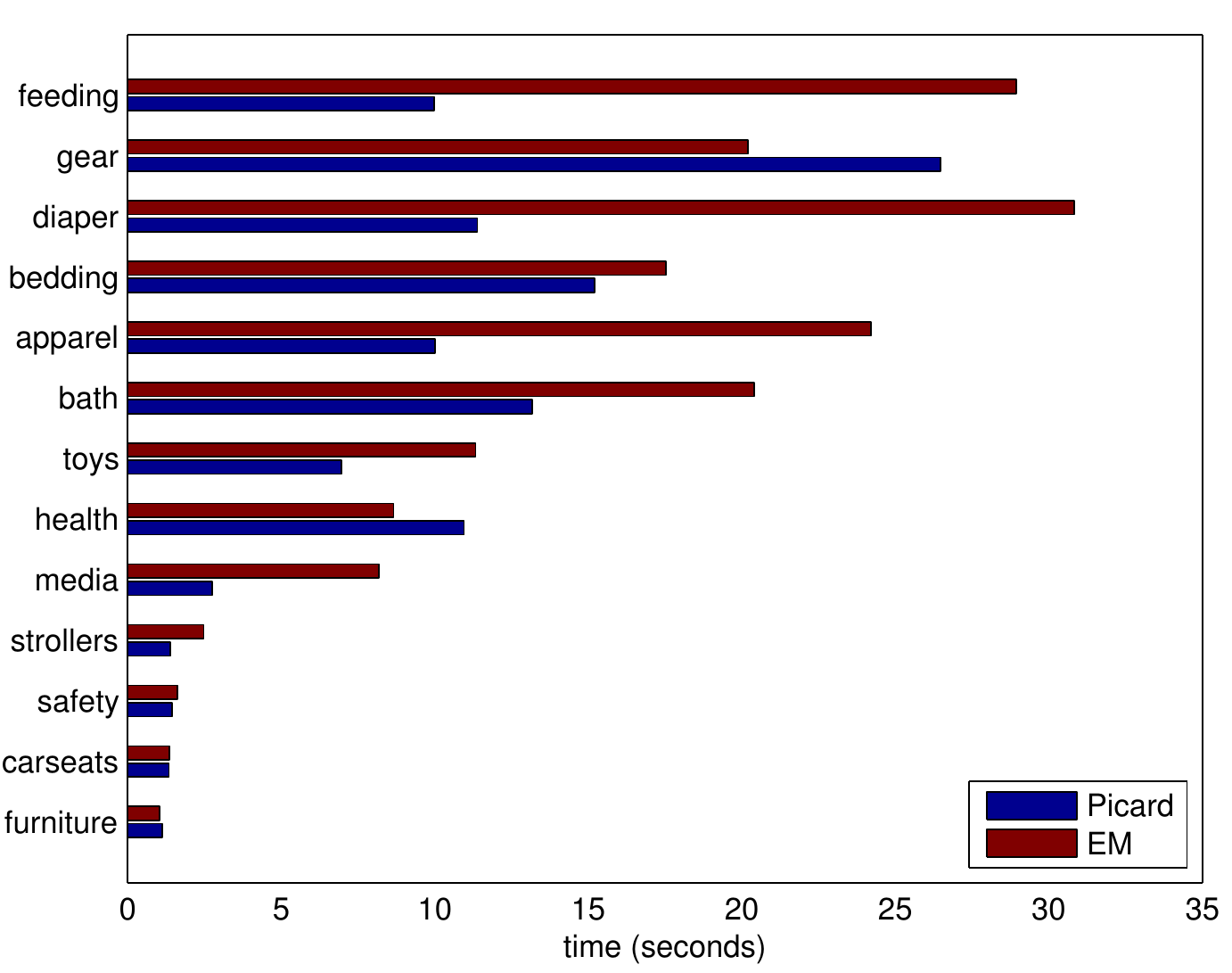}}
  \caption{Evaluation of EM and the Picard iteration on the baby registries dataset using moments-matching initialization.}
  \label{fig:baby-test-mm}
\end{figure*}

Similarly to its behavior on synthetic datasets, the Picard iteration provides overall significantly shorter runtimes when dealing with large matrices and training sets. As shown in Table~\ref{table:final-ll-comp}, the final log-likelihoods are very close (on the order $10^{-2}$ to $10^{-4}$) to those attained by the EM algorithm.

Using a moments-matching initialization leaves Picard's runtimes overall unchanged (a notable exception being the `gear' category). However, EM's runtime decreases drastically with this initialization, although it remains significantly longer than Picard's in most categories. 

The final log-likelihoods are also closer when using moments-matching initialization (see Table~\ref{table:final-ll-comp}).

\section{Conclusions and future work}
We approached the problem of maximum-likelihood estimation of a DPP kernel from a different angle: we analyzed the stationarity properties of the cost function and used them to obtain a novel fixed-point Picard iteration. Experiments on both simulated and real data showed that for a range of ground set sizes and number of samples, our Picard iteration runs remarkably faster that the previous best approach, while being extremely simple to implement. In particular, for large ground set sizes our experiments show that our algorithm cuts down runtime to a fraction of the previously optimal EM runtimes.

We presented a theoretical analysis of the convergence properties of the Picard iteration, and found sufficient conditions for its convergence. However, our experiments reveal that the Picard iteration converges for a wider range of step-sizes (parameter $a$ in the iteration and plots) than currently accessible to our theoretical analysis. It is a part of our future work to develop more complete convergence theory, especially because of its strong empirical performance. 

In light of our results, another line of future work is to apply fixed-point analysis to other DPP learning tasks.

\subsubsection*{Acknowledgments}
Suvrit Sra is partly supported by NSF grant: IIS-1409802. 

\appendix
\section{Bound on $a$}
\begin{prop}
  \label{prop.abound}
  Let $L$, $U_i$, and $\Delta$ be as defined above; set $Z = \tfrac1n\sum_i U_i\invp{U_i^*LU_i}U_i^*$. Define the constant
  \begin{equation}
    \label{eq:12}
    \gamma := \max\{\lambda_{\min}(LZ),1/\lambda_{\max}(I+L)\}.
  \end{equation}
  Then, $0 \le \gamma \le 1$ and for $a \le (1-\gamma)^{-1}$ the update
  \begin{equation*}
    L' \gets L + a L\Delta L
  \end{equation*}
  ensures that $L'$ is also positive definite. 
\end{prop}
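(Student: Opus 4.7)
The plan is to split the claim into two parts: first verify the range $\gamma\in[0,1]$, then establish positive definiteness of $L'$ via a clean spectral reduction. Let $\mu:=1/\lambda_{\max}(I+L)$ and $\nu:=\lambda_{\min}(LZ)$, so $\gamma=\max(\mu,\nu)$.

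For the range of $\gamma$: $L\succ 0$ gives $\lambda_{\max}(I+L)\ge 1$, so $\mu\in(0,1]$. For $\nu$, note $LZ$ is similar to the PSD matrix $L^{1/2}ZL^{1/2}$, so $\nu\ge 0$. The nontrivial bound $\nu\le 1$ comes from the key observation that, setting $V_i:=L^{1/2}U_i$, the identity $V_i^*V_i=U_i^*LU_i$ lets us rewrite each summand of $L^{1/2}ZL^{1/2}$ as $V_i(V_i^*V_i)^{-1}V_i^*$, which is the orthogonal projection onto $\text{range}(V_i)$. Each such projector is $\preceq I$, so the average $L^{1/2}ZL^{1/2}\preceq I$, giving $\nu\le 1$.

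For positive definiteness, congruence by $L^{-1/2}$ reduces $L'\succ 0$ to the condition $I+aL^{1/2}\Delta L^{1/2}\succ 0$. Writing $M:=L^{1/2}ZL^{1/2}$ and $N:=L^{1/2}(I+L)^{-1}L^{1/2}$, so $L^{1/2}\Delta L^{1/2}=M-N$, it suffices to show $a\,\lambda_{\max}(N-M)\le 1$. I would bound $\lambda_{\max}(N-M)$ by combining two complementary operator inequalities. Since $M\succeq 0$ and $N$ has eigenvalues $\lambda_i(L)/(1+\lambda_i(L))\le 1-\mu$, one has $N-M\preceq N\preceq(1-\mu)I$; and since $N\preceq I$ by the same eigenvalue formula while $M\succeq\nu I$, one also has $N-M\preceq I-M\preceq(1-\nu)I$. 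Taken together, $N-M\preceq(1-\gamma)I$. Hence whenever $a\le 1/(1-\gamma)$ we get $a(N-M)\preceq I$, so $I+a(M-N)\succeq 0$ and $L'\succeq 0$, with strict positive definiteness as soon as $a<1/(1-\gamma)$.

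The main obstacle I anticipate is the bound $\nu\le 1$: it hinges on recognizing that each $L^{1/2}U_i(U_i^*LU_i)^{-1}U_i^*L^{1/2}$ is literally the orthogonal projector onto $\text{range}(L^{1/2}U_i)$, which is the nontrivial spectral input. Once that is in hand, the two one-sided upper bounds on $N-M$ combine essentially automatically into the $\gamma$-governed bound, and the rest is bookkeeping.
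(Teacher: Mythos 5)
Your proof is correct and takes essentially the same route as the paper: after the congruence by $L^{-1/2}$, your two operator bounds $N-M\preceq(1-\mu)I$ and $N-M\preceq(1-\nu)I$ are exactly equivalent to the paper's inequality $\lambda_{\min}\bigl((I+L)^{-1}+L^{1/2}ZL^{1/2}\bigr)\ge\gamma$, and your projector identity for $L^{1/2}U_i(U_i^*LU_i)^{-1}U_i^*L^{1/2}$ simply makes explicit the paper's ``block-matrix calculations'' showing $Z\preceq L^{-1}$. The only minor divergence is the endpoint you flag yourself: at $a=(1-\gamma)^{-1}$ your argument yields $L'\succeq 0$ rather than $L'\succ 0$ in degenerate cases, but the paper's own chain of inequalities has the same looseness there.
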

\begin{proof}
Let $Z = \tfrac1n\sum_{i=1}^n U_i\invp{U_i^*LU_i}U_i^*$.

To ensure $L+aL\Delta L \succ 0$ we equivalently show
\begin{align*}
  &\inv{L}+ a\Bigl(\tfrac1n\sum_{i=1}^n U_i\invp{U_i^*LU_i}U_i^* - \invp{L+I}\Bigr) \succ 0\\
  &\implies\ I + a L^{1/2}ZL^{1/2} \succ a L\invp{L+I}\\
  &\implies\  (1-a)I + a(I+L)^{-1} + aL^{1/2}ZL^{1/2} \succ 0 \tag{as $L(L+I)^{-1} = I - (I+L)^{-1}$}\\
  &\implies\ (1-a) + a\lambda_{\min}(\invp{I+L}+L^{1/2}ZL^{1/2}) > 0.
\end{align*}
This inequality can be numerically optimized to find the largest feasible value of $a$. The simpler bound in question can be obtained by noting that
\begin{equation*}
  \begin{split}
    &\lambda_{\min}(\invp{I+L}+L^{1/2}ZL^{1/2}) \\
    &\ge \max\{\lambda_{\min}(LZ),1/\lambda_{\max}(I+L)\} = \gamma.
  \end{split}
\end{equation*}
Thus, we have the easily computable bound for feasible $a$:
\begin{equation*}
  a \le \frac{1}{1-\gamma}.
\end{equation*}
Clearly, by construction $\gamma \ge 0$. To see why $\gamma \le 1$, observe that $(I+L) \prec I$, so that $\lambda_{\min}(\invp{I+L}) < 1$. Further, block-matrix calculations show that $Z \preceq \inv{L}$, whereby $\lambda_{\min}(L^{1/2}ZL^{1/2}) \le \lambda_{\min}(I) = 1$.
\end{proof}

\bibliographystyle{abbrvnat}
\setlength{\bibsep}{3pt}
\bibliography{dpp}

\end{document}